\documentclass[twocolumn]{article}
\usepackage[utf8]{inputenc}
\usepackage{amsmath, amssymb, amsthm}
\usepackage{algorithm}
\usepackage{algpseudocode}
\usepackage{graphicx}
\usepackage{float}
\usepackage{booktabs}
\usepackage[breaklinks=true]{hyperref}
\usepackage{url}

\usepackage{xcolor}
\usepackage{tikz}
\usetikzlibrary{shapes.geometric, arrows.meta, positioning, fit, backgrounds, calc}
\usepackage{subcaption}
\usepackage{natbib}
\usepackage{microtype}
\usepackage[margin=0.75in]{geometry}

\newtheorem{definition}{Definition}
\newtheorem{proposition}{Proposition}

\definecolor{controller}{RGB}{220,50,47}
\definecolor{model}{RGB}{38,139,58}
\definecolor{view}{RGB}{38,92,199}
\definecolor{planner}{RGB}{147,62,197}
\definecolor{designer}{RGB}{38,139,58}
\definecolor{critic}{RGB}{220,50,47}

\title{\textbf{FactorSmith: Agentic Simulation Generation\\via Markov Decision Process Decomposition\\with Planner--Designer--Critic Refinement}}

\author{
  Ali Shamsaddinlou \and Morteza NourelahiAlamdari
}

\date{}

\begin{document}

\maketitle

\begin{abstract}
Generating executable simulations from natural language specifications remains a challenging problem due to the limited reasoning capacity of large language models (LLMs) when confronted with large, interconnected codebases. This paper presents \textsc{FactorSmith}, a framework that synthesizes playable game simulations in code from textual descriptions by combining two complementary ideas: \emph{factored POMDP decomposition} for principled context reduction and a \emph{hierarchical planner--designer--critic agentic workflow} for iterative quality refinement at every generation step. Drawing on the factored partially observable Markov decision process (POMDP) representation introduced by FactorSim~\citep{sun2024factorsim}, the proposed method decomposes a simulation specification into modular steps where each step operates only on a minimal subset of relevant state variables, limiting the context window that any single LLM call must process. Inspired by the agentic trio architecture of SceneSmith~\citep{pfaff2026scenesmith}, \textsc{FactorSmith} embeds within every factored step a three-agent interaction---a \emph{planner} that orchestrates workflow, a \emph{designer} that proposes code artifacts, and a \emph{critic} that evaluates quality through structured scoring---enabling iterative refinement with checkpoint rollback. This paper formalizes the combined approach, presents the mathematical framework underpinning context selection and agentic refinement, and describes the open-source implementation. Experiments on the PyGame Learning Environment benchmark demonstrate that \textsc{FactorSmith} generates simulations with improved prompt alignment, fewer runtime errors, and higher code quality compared to non-agentic factored baselines.
\end{abstract}

\section{Introduction}
\label{sec:intro}

Simulations are indispensable for training reinforcement learning (RL) agents, testing robotic policies, and prototyping interactive systems. A major bottleneck in leveraging simulations at scale is the cost of designing and developing them, especially when detailed design specifications must be satisfied. Recent advances in large language models (LLMs) have opened the possibility of generating full simulations as code from natural language descriptions~\citep{sun2024factorsim, wang2023gensim}. However, LLMs struggle when faced with large, interconnected contexts: they hallucinate non-existent functions, ignore parts of the specification, or modify code unrelated to the current task~\citep{liu2024lost}.

Two recent lines of work address different facets of this problem. \textbf{FactorSim}~\citep{sun2024factorsim} introduces a \emph{factored POMDP representation} that decomposes simulation generation into modular steps, selecting only relevant state variables as context for each step. This structural decomposition reduces the reasoning burden per LLM call but relies on single-shot generation within each step---if the LLM produces a flawed output, there is no mechanism for self-correction beyond retry-based self-debugging.

\textbf{SceneSmith}~\citep{pfaff2026scenesmith} proposes a \emph{hierarchical agentic framework} where each construction stage is implemented as an interaction among three VLM agents: a \emph{designer} that proposes modifications, a \emph{critic} that evaluates quality, and an \emph{orchestrator} (planner) that manages iterative refinement with checkpoint rollback. This pattern dramatically improves output quality for 3D scene generation but does not exploit the structural properties of coded simulations for context reduction.

This paper presents \textsc{FactorSmith}, which unifies these two approaches. The key insight is that \textbf{factored decomposition and agentic refinement are complementary}: decomposition ensures each generation step receives a minimal, relevant context window, while the planner--designer--critic trio ensures each step's output is iteratively refined to high quality before the pipeline advances. The contributions of this work are as follows:

\begin{enumerate}
    \item A combined framework is formalized that embeds a planner--designer--critic agentic workflow \emph{within} each step of a factored POMDP generation pipeline (Section~\ref{sec:method}).
    \item A mathematical treatment is provided showing how agentic refinement composes with factored context selection, along with an analysis of the computational trade-offs (Section~\ref{sec:formalization}).
    \item An open-source implementation is described, built on the OpenAI Agents SDK with SQLite-backed session management and structured scoring (Section~\ref{sec:implementation}).\footnote{Code available at \url{https://github.com/alishams21/factorsmith}}
    \item Experimental results demonstrate improvements in code correctness, prompt alignment, and generation robustness over non-agentic baselines (Section~\ref{sec:experiments}).
\end{enumerate}

\section{Related Work}
\label{sec:related}

\paragraph{LLM-Based Simulation Generation.}
LLMs have been applied to generate components of simulations including reward functions~\citep{ma2023eureka}, task configurations~\citep{wang2023gensim}, game levels~\citep{todd2023level}, and full game code~\citep{sun2024factorsim}. FactorSim~\citep{sun2024factorsim} is most closely related to this work: it uses a factored POMDP to decompose simulation generation into modular steps with reduced context. \textsc{FactorSmith} builds directly on this decomposition but augments each step with agentic refinement.

\paragraph{Agentic Code Generation.}
Multi-agent frameworks for code generation have gained traction. AgentCoder~\citep{huang2023agentcoder} uses a programmer, test designer, and test executor agent. However, generating reliable test cases for complex simulations is itself error-prone, leading to degraded feedback loops. ChatDev~\citep{qian2024chatdev} simulates a software company with specialized agents but does not exploit domain-specific structure. \textsc{FactorSmith} differs by using domain-aware structured scoring rather than generated tests, and by operating within a factored representation that limits each agent's scope.

\paragraph{Agentic Scene and Environment Generation.}
SceneSmith~\citep{pfaff2026scenesmith} introduces a designer--critic--orchestrator pattern for 3D indoor scene generation, demonstrating that separating proposal from evaluation reduces self-assessment bias. SceneWeaver~\citep{yang2025sceneweaver} uses a single-agent reason-act-reflect loop. LL3M~\citep{lu2025ll3m} applies designer--critic interaction for 3D asset generation. \textsc{FactorSmith} adapts the multi-agent pattern to code generation for simulations, combining it with the principled context reduction from factored POMDPs.

\paragraph{Structured LLM Reasoning.}
Chain-of-Thought~\citep{wei2022chain} prompting decomposes tasks into sequential steps. Tree-of-Thought~\citep{yao2024tree} explores multiple reasoning paths. Self-debugging~\citep{chen2024teaching} allows LLMs to iteratively fix code given error messages. The planner--designer--critic workflow in \textsc{FactorSmith} can be seen as a structured form of multi-agent deliberation that goes beyond self-debugging by separating the generation and evaluation roles across distinct agents with different system prompts and scoring rubrics.

\section{Background}
\label{sec:background}

\subsection{POMDP Formulation for Simulations}

A simulation can be modeled as a Partially Observable Markov Decision Process (POMDP), represented as a tuple $\mathcal{M} = \langle \mathcal{S}, \mathcal{A}, \mathcal{O}, T, \Omega, R \rangle$ where $\mathcal{S}$ is the state space, $\mathcal{A}$ is the action space, $\mathcal{O}$ is the observation space, $T: \mathcal{S} \times \mathcal{A} \to \Delta(\mathcal{S})$ is the transition function, $\Omega: \mathcal{S} \to \Delta(\mathcal{O})$ is the observation function, and $R: \mathcal{S} \times \mathcal{A} \times \mathcal{S} \to \mathbb{R}$ is the reward function.

\begin{definition}[Factored POMDP~\citep{sun2024factorsim}]
\label{def:factored-pomdp}
A POMDP is \emph{factored} over its state space $\mathcal{S} = \mathcal{S}_{[1]} \times \cdots \times \mathcal{S}_{[n]}$ if there exist scopes $Z_1, \ldots, Z_m \subseteq \{1, \ldots, n\}$ such that the transition function decomposes as:
\begin{equation}
    T(s' \mid s, a) = \prod_{i=1}^{m} T_i\bigl(s'_{[i]} \mid s_{[Z_i]}, a\bigr),
\end{equation}
and the reward function decomposes as:
\begin{equation}
    R(s, a) = \sum_{i=1}^{l} R_i\bigl(s_{[Z_i]}, a\bigr),
\end{equation}
where $s_{[Z]}$ denotes the projection of state $s$ onto the scope set $\mathcal{S}_{[Z]} := \bigotimes_{j \in Z} \mathcal{S}_{[j]}$.
\end{definition}

This factorization enables \emph{context selection}: when generating code for a particular step $q_k$, only the state variables in scope $Z_k$ and the functions that depend on those variables need to be provided to the LLM, rather than the entire codebase.

\subsection{The Planner--Designer--Critic Pattern}

SceneSmith~\citep{pfaff2026scenesmith} introduces a three-agent interaction pattern for iterative refinement:

\begin{itemize}
    \item \textbf{Designer} $\mathcal{D}$: Proposes modifications to the current artifact using structured tools.
    \item \textbf{Critic} $\mathcal{C}$: Evaluates the proposal against quality criteria, producing scalar scores and natural-language feedback.
    \item \textbf{Planner} $\mathcal{P}$: Orchestrates the interaction, tracking scores and deciding when to accept, request revision, or rollback.
\end{itemize}

The separation of proposal from evaluation reduces self-assessment bias~\citep{pfaff2026scenesmith}: an independent critic is better positioned to identify errors that a generation-focused designer may overlook.

\section{Method: FactorSmith}
\label{sec:method}

\textsc{FactorSmith} generates coded simulations from natural language by combining factored POMDP decomposition with planner--designer--critic refinement at every generation step. Figure~\ref{fig:architecture} provides an overview of the full architecture. This section first describes the overall pipeline (Section~\ref{sec:pipeline}) and then formalizes the agentic refinement process within each factored step (Section~\ref{sec:agentic-step}).

\begin{figure*}[!ht]
\centering
\includegraphics[width=0.85\textwidth]{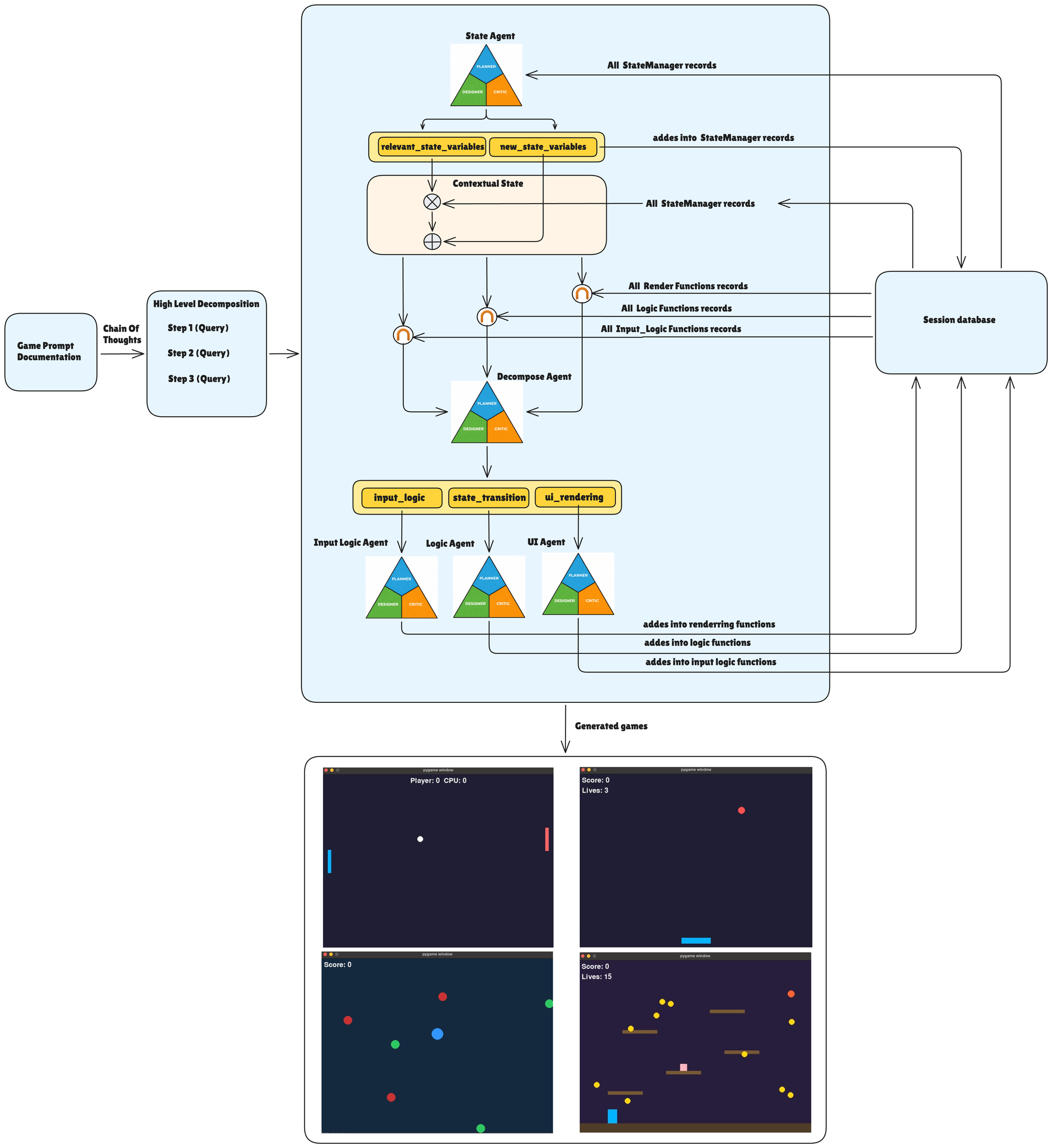}
\caption{Architecture of \textsc{FactorSmith}. A game prompt is first decomposed into modular steps via Chain-of-Thought. Each step is processed by a \textbf{State Agent} (planner/designer/critic trio) that identifies relevant and new state variables, selecting only the contextual subset from the session database. The \textbf{Decompose Agent} then splits the step into three MVC components---input logic, state transition, and UI rendering---each handled by its own agentic trio (\textbf{Input Logic Agent}, \textbf{Logic Agent}, \textbf{UI Agent}). All generated functions and state variables are persisted in the session database, enabling factored context selection for subsequent steps.}
\label{fig:architecture}
\end{figure*}

\subsection{Pipeline Overview}
\label{sec:pipeline}

Given a natural language specification $Q_{\text{text}}$ describing a simulation, \textsc{FactorSmith} proceeds in three phases:

\paragraph{Phase 1: High-Level Decomposition.}
The specification $Q_{\text{text}}$ is decomposed into a sequence of modular steps $(q_1, \ldots, q_K)$ using Chain-of-Thought prompting:
\begin{equation}
    (q_1, \ldots, q_K) \sim p(q_1, \ldots, q_K \mid Q_{\text{text}}).
\end{equation}
Each step $q_k$ describes a self-contained module of the simulation (e.g., ``introduce a ball that falls under gravity'') and is constrained to contain at most one input-handling function, one state-transition function, and one rendering function. This mirrors the Model-View-Controller (MVC) design pattern.

\paragraph{Phase 2: Factored Step Execution.}
For each step $q_k$, the pipeline executes a sequence of sub-steps corresponding to POMDP components, each augmented with an agentic trio:

\begin{enumerate}
    \item \textbf{State Space Update} (Context Selection): Identify relevant existing state variables $\mathcal{S}_{[Z_k]}$ and define new state variables needed for $q_k$.
    \item \textbf{Decompose Query}: Decompose $q_k$ into at most three sub-functions (input logic, state transition, UI rendering) using only the scoped context $\mathcal{S}_{[Z_k]}$ and functions with overlapping scope.
    \item \textbf{Controller Update}: Generate the input-handling function $T^{(a)}_{k+1}$ using scoped context.
    \item \textbf{Model Update}: Generate the state-transition function $T^{(s)}_{k+1}$ using scoped context.
    \item \textbf{View Update}: Generate the observation/rendering function $\Omega_{k+1}$ using scoped context.
\end{enumerate}

\paragraph{Phase 3: Assembly and Validation.}
After all steps are executed, the generated functions and state variables are assembled into a complete executable simulation. A sanity check verifies compilation and basic runtime correctness.

\subsection{Agentic Refinement Within Each Factored Step}
\label{sec:agentic-step}

The key innovation of \textsc{FactorSmith} is that \emph{each sub-step} in Phase~2 is executed not by a single LLM call but by a planner--designer--critic trio. The following describes the workflow for a generic sub-step; the same pattern applies to state selection, decomposition, and each MVC component.

\begin{algorithm}[H]
\small
\caption{\textsc{FactorSmith}: Agentic Factored Step Execution}
\label{alg:agentic-step}
\begin{algorithmic}[1]
\Require Step instruction $q_k$, POMDP $\mathcal{M}_k$, agents $(\mathcal{P}, \mathcal{D}, \mathcal{C})$, max rounds $N_{\max}$, threshold $\tau$
\Ensure Updated POMDP $\mathcal{M}_{k+1}$

\State $\mathcal{S}_{[Z_k]} \gets \textsc{CtxSelect}(\mathcal{M}_k, q_k)$
\For{each $c \in \{\text{ctrl}, \text{model}, \text{view}\}$}
    \State $x_0 \gets \mathcal{D}.\textsc{Init}(\mathcal{S}_{[Z_k]}, q_k, c)$
    \State $(\sigma_0, f_0) \gets \mathcal{C}.\textsc{Eval}(x_0, \mathcal{S}_{[Z_k]}, q_k)$
    \State $x^* \gets x_0$, $\sigma^* \gets \sigma_0$
    \For{$r = 1, \ldots, N_{\max}$}
        \If{$\min(\sigma_{r-1}) \geq \tau$} \textbf{break}
        \EndIf
        \State $x_r \gets \mathcal{D}.\textsc{Revise}(x_{r-1}, f_{r-1})$
        \State $(\sigma_r, f_r) \gets \mathcal{C}.\textsc{Eval}(x_r, \mathcal{S}_{[Z_k]}, q_k)$
        \If{$\textsc{Score}(\sigma_r) < \textsc{Score}(\sigma^*)$}
            \State $\mathcal{P}.\textsc{Rollback}(x^*)$
        \Else
            \State $x^* \gets x_r$, $\sigma^* \gets \sigma_r$
        \EndIf
    \EndFor
    \State $\mathcal{M}_{k+1} \gets \textsc{Integrate}(\mathcal{M}_k, x^*, c)$
\EndFor
\State \Return $\mathcal{M}_{k+1}$
\end{algorithmic}
\end{algorithm}

Each agent operates with a constrained scope determined by the factored representation:

\begin{itemize}
    \item The \textbf{Designer} receives only the scoped state manager code $\mathcal{S}_{[Z_k]}$, relevant existing functions with overlapping scope, and the step instruction $q_k$. It produces a JSON artifact containing a function name, description, and implementation.
    \item The \textbf{Critic} evaluates the designer's output against domain-specific rubrics (correctness, completeness, state usage, code quality), producing structured scores $\sigma \in [0,10]^d$ and natural-language feedback $f$.
    \item The \textbf{Planner} orchestrates by calling \texttt{request\_initial\_design()}, \texttt{request\_critique()}, and \texttt{request\_design\_change(instruction)} as tools, deciding when to accept or rollback.
\end{itemize}

\section{Formal Framework}
\label{sec:formalization}

\subsection{Factored Generation with Agentic Refinement}

Let $p_\theta$ denote the LLM's conditional distribution. The original FactorSim factored generation decomposes as:
\begin{equation}
\label{eq:factored-original}
    p(\mathcal{M}_{k+1} \mid \mathcal{M}_k, q_k) \approx \prod_{i=1}^{4} p_i,
\end{equation}
where the four factors are:
\begin{align}
    p_1 &= \underbrace{p(S_{k+1} \mid S_k, q_k)}_{\text{state update}}, \nonumber \\
    p_2 &= \underbrace{p(\mathcal{S}_{[Z_k]} \mid S_{k+1}, q_k)}_{\text{context selection}}, \nonumber \\
    p_3 &= \underbrace{p(T_{k+1} \mid T_{[Z_k]}, \mathcal{S}_{[Z_k]}, \mathcal{A}, q_k)}_{\text{transition update}}, \label{eq:factors} \\
    p_4 &= \underbrace{p(\Omega_{k+1} \mid \mathcal{S}_{[Z_k]}, q_k)}_{\text{observation update}}, \nonumber
\end{align}
and each factor is realized by a single LLM call.

In \textsc{FactorSmith}, each single-call factor is replaced with an agentic refinement process. For a generic factor $p(x \mid \text{ctx}, q_k)$, define the refined distribution:
\begin{equation}
\label{eq:agentic-refinement}
\begin{split}
    p^{\text{agent}}(x \mid \text{ctx}, q_k) = \sum_{r=0}^{N_{\max}} &\; \pi_{\mathcal{P}}(r \mid \sigma_{0:r}) \\
    &\cdot p_{\mathcal{D}}(x_r \mid x_{r-1}, f_{r-1}, \text{ctx}, q_k),
\end{split}
\end{equation}
where $\pi_{\mathcal{P}}$ is the planner's stopping policy conditioned on the score trajectory $\sigma_{0:r}$, $f_{r-1}$ is the critic's feedback from round $r{-}1$, and $x_0 \sim p_{\mathcal{D}}(\cdot \mid \text{ctx}, q_k)$ is the initial design.

\subsection{Context Reduction Analysis}

The critical advantage of factored decomposition is context reduction. Let $|C_k|$ denote the total context size at step $k$ without factorization (i.e., the entire codebase), and $|C_{[Z_k]}|$ denote the scoped context. At step $k$, the reduction ratio is:
\begin{equation}
    \rho_k = \frac{|C_{[Z_k]}|}{|C_k|} = \frac{|\mathcal{S}_{[Z_k]}| + |T_{[Z_k]}| + |\Omega_{[Z_k]}|}
                                               {|\mathcal{S}_k| + |T_k| + |\Omega_k|},
\end{equation}
where $|\cdot|$ denotes token count. In practice, $\rho_k \ll 1$ because most steps interact with a small subset of the total state space. This reduction is \emph{preserved} in the agentic setting: the designer and critic both receive the scoped context $C_{[Z_k]}$ rather than the full codebase.

\subsection{Critique Score Formalization}

Each critic evaluation produces a vector of category scores. For a sub-step of type $c$, the critic's output is a structured object:
\begin{equation}
    \sigma = \mathcal{C}(x, \text{ctx}, q_k) \in [0,10]^{d_c},
\end{equation}
where $d_c$ is the number of scoring categories for component type $c$. The specific categories vary by component:

\begin{center}
\small
\begin{tabular}{lp{3.8cm}}
\toprule
\textbf{Component} & \textbf{Scoring Categories} \\
\midrule
State Change & correctness, completeness, relevance \\
Decompose & decomp.\ quality, completeness, clarity \\
Input Logic & correctness, state usage, code quality \\
State Trans. & correctness, state usage, code quality \\
UI Rendering & correctness, visual quality, state usage, code quality \\
\bottomrule
\end{tabular}
\end{center}

The planner uses the total score $\Sigma = \sum_{i} \sigma_i$ and per-category deltas $\Delta\sigma_i = \sigma_i^{(r)} - \sigma_i^{(r-1)}$ to decide whether to continue refinement, accept the current output, or rollback:
\begin{equation}
\small
    \pi_{\mathcal{P}}(\text{act} \mid \sigma_{0:r}) \!=\! \begin{cases}
        \textsc{Accept} & \min_i \sigma_i^{(r)} \!\geq\! \tau \\
        \textsc{Rollback} & \Sigma^{(r)} \!<\! \Sigma^{(r-1)} \\
        \textsc{Refine} & \text{otherwise}
    \end{cases}
\end{equation}

\subsection{Composition of Factorization and Refinement}

The full generation of step $q_k$ in \textsc{FactorSmith} combines both principles:
\begin{equation}
\label{eq:combined}
\begin{split}
    p^{\textsc{FS}}(&\mathcal{M}_{k+1} \mid \mathcal{M}_k, q_k) = \\
    & p^{\text{agent}}_{\text{state}}(\mathcal{S}_{k+1}, Z_k \mid \mathcal{S}_k, q_k) \\
    & \cdot p^{\text{agent}}_{\text{decomp}}(\phi_k \mid \mathcal{S}_{[Z_k]}, q_k) \\
    & \cdot \prod_{c \in \text{MVC}} p^{\text{agent}}_c(x_c \mid C_{[Z_k]}, \phi_k, q_k),
\end{split}
\end{equation}
where $\phi_k$ is the decomposition into sub-functions and MVC $= \{\text{controller}, \text{model}, \text{view}\}$. Each $p^{\text{agent}}$ factor is realized by the agentic refinement process of Equation~\ref{eq:agentic-refinement}.

\begin{proposition}[Quality Monotonicity]
Under the assumption that the critic's scoring function is calibrated (i.e., higher scores correspond to higher actual quality) and the designer's revision monotonically improves the aspect identified by the critic, the checkpoint rollback mechanism in Algorithm~\ref{alg:agentic-step} ensures that the total score is non-decreasing across refinement rounds: $\Sigma^{(r+1)} \geq \Sigma^{(r)}$ for all rounds $r$ at which an output is accepted.
\end{proposition}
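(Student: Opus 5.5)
The argument only needs the checkpoint invariant maintained by Algorithm~\ref{alg:agentic-step}; the calibration and designer-improvement assumptions explain why refinement makes progress, but they are not needed for monotonicity itself. The key is to read $\Sigma^{(r)}$ as the total score of the \emph{accepted checkpoint} after round $r$, i.e.\ $\Sigma^{(r)} := \textsc{Score}(\sigma^*_r)$, where $(x^*_r,\sigma^*_r)$ is the value of $(x^*,\sigma^*)$ at the end of round $r$. Raw proposals $x_r$ can certainly score lower than their predecessors; the statement restricts attention to rounds at which an output is accepted. I take $\textsc{Score}(\sigma)=\Sigma=\sum_i \sigma_i$, consistent with the planner policy.

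The proof is an induction on $r$ with the invariant
\[
\textsc{Score}(\sigma^*_r) = \max_{0 \le j \le r} \textsc{Score}(\sigma_j).
\]
The base case $r=0$ holds by the initialisation $x^* \gets x_0$, $\sigma^* \gets \sigma_0$.

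For the inductive step, fix a round $r+1$ that is actually executed, so the early-exit test $\min(\sigma_r)\ge\tau$ failed. Compare the fresh critique $\sigma_{r+1}$ with the checkpoint $\sigma^*_r$ and split into two cases.

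\emph{Case 1: $\textsc{Score}(\sigma_{r+1}) < \textsc{Score}(\sigma^*_r)$.} The planner calls $\mathcal{P}.\textsc{Rollback}(x^*)$ and leaves $(x^*,\sigma^*)$ unchanged. Hence $\Sigma^{(r+1)}=\Sigma^{(r)}$.

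\emph{Case 2: otherwise.} The new output is accepted, and $\Sigma^{(r+1)}=\textsc{Score}(\sigma_{r+1})\ge \Sigma^{(r)}$.

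In both cases $\Sigma^{(r+1)}\ge\Sigma^{(r)}$, and the max-invariant is preserved. If the loop breaks early, $x^*$ is frozen and the sequence stays constant, which is trivially non-decreasing. So the sequence of accepted scores is monotone, and the $x^*$ passed to $\textsc{Integrate}$ attains the largest critic score seen. The calibration assumption then transfers this ordering on scores to an ordering on actual quality.

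The main obstacle is interpretive rather than technical. Algorithm~\ref{alg:agentic-step} revises from $x_{r-1}$, the latest proposal, rather than from $x^*$. After a rollback, the next proposal therefore need not dominate $x^*$, and the \emph{unfiltered} score sequence is not monotone. I would state this explicitly, and handle it in one of two ways:
\begin{itemize}
\item make $\mathcal{P}.\textsc{Rollback}(x^*)$ reset the designer's working artifact to $x^*$ (and its feedback to $f^*$), so that the $x_{r-1}$ used by $\textsc{Revise}$ after a rollback is the checkpoint; or
\item use the designer-monotonicity assumption to argue that revising from the checkpoint does not lower the score.
\end{itemize}
Neither is required for the accepted-score claim itself, which follows from the comparison against $\sigma^*$ alone.
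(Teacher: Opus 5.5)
Your proof is correct and follows the paper's argument, made precise. The paper notes that the checkpoint is overwritten only when the new total score is no lower, and that rollback otherwise preserves it, so accepted scores cannot decrease. Your max-invariant induction says the same thing carefully. In particular, you compare each new score against the checkpoint $\sigma^*$, as Algorithm~\ref{alg:agentic-step} does, not against the previous raw round $\Sigma^{(r-1)}$ as the planner policy and the paper's wording suggest; read literally, the latter would fail, so your version is the one that actually holds.
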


\begin{proof}
By construction, the planner only updates the checkpoint when $\Sigma^{(r)} \geq \Sigma^{(r-1)}$ (line~12 of Algorithm~\ref{alg:agentic-step}). If $\Sigma^{(r)} < \Sigma^{(r-1)}$, the planner rolls back to the previous checkpoint (line~11), preserving the previous score. Thus, the sequence of \emph{accepted} total scores is non-decreasing.
\end{proof}

\section{Implementation}
\label{sec:implementation}

\textsc{FactorSmith} is implemented in Python and built on the OpenAI Agents SDK. The architecture consists of several interconnected components.

\subsection{Session Store}

The \texttt{SessionStore} class provides SQLite-backed persistent storage for the evolving simulation state. It maintains four tables:

\begin{itemize}
    \item \textbf{state\_variables}: Stores each state variable's name, value, type, description, and a \texttt{dont\_clean} flag for protected variables (e.g., \texttt{SCREEN\_WIDTH}).
    \item \textbf{functions}: Stores generated functions with their name, type (input\_logic, logic, render), implementation code, and list of relevant state variable names.
    \item \textbf{queries}: Tracks the sequence of step instructions processed.
    \item \textbf{metadata}: Key-value store for pipeline state (token counts, current query index).
\end{itemize}

The session store supports \texttt{snapshot()} and \texttt{restore()} operations for the retry loop: before each step attempt, the full state is snapshotted, and on failure, it is restored.

\subsection{Agent Architecture}

Each pipeline sub-step is handled by a concrete agent class that inherits from \texttt{BaseSimAgent}. The base class provides:

\begin{itemize}
    \item \textbf{Agent creation}: Separate \texttt{Agent} instances for designer, critic, and planner, each with domain-specific system prompts loaded from YAML templates via a prompt registry.
    \item \textbf{Session management}: Per-agent \texttt{SQLiteSession} instances that maintain conversational context across refinement rounds.
    \item \textbf{Planner tools}: The planner agent is equipped with function tools---\texttt{request\_initial\_design()}, \texttt{request\_critique()}, \texttt{request\_design\_change(instruction)}, and \texttt{reset\_to\_previous\_checkpoint()}---that it invokes through the Agents SDK's tool-calling mechanism.
    \item \textbf{Score tracking}: The base class maintains checkpoint scores and implements rollback when the total score regresses.
\end{itemize}

The five concrete agent classes are:
\begin{enumerate}
    \item \texttt{StatefulStateChangeAgent}: Identifies relevant and new state variables.
    \item \texttt{StatefulDecomposeQueryAgent}: Decomposes a step into MVC sub-functions.
    \item \texttt{StatefulInputLogicAgent}: Generates input-handling (controller) code.
    \item \texttt{StatefulLogicAgent}: Generates state-transition (model) code.
    \item \texttt{StatefulUIAgent}: Generates rendering (view) code.
\end{enumerate}

\subsection{Structured Scoring}

Each agent type has a domain-specific \texttt{CritiqueWithScores} subclass that defines the scoring rubric as a structured output type. The critic agent is configured with \texttt{output\_type} set to the appropriate dataclass, ensuring the LLM returns structured scores rather than free-form text. This enables programmatic score comparison and delta computation:

\begin{equation}
    \Delta\sigma_i^{(r)} = \sigma_i^{(r)} - \sigma_i^{(r-1)},
\end{equation}
formatted for the planner as, e.g., \texttt{Correctness: 6 $\to$ 8 (+2)}.

The formatted deltas are included in the planner's context to support informed decisions about whether to continue refinement.

\subsection{Pipeline Flow}

The \texttt{run\_step()} function in the pipeline module orchestrates the execution of a single step $q_k$:

\begin{enumerate}
    \item \textbf{Snapshot} the current session state.
    \item \textbf{State Change}: Call the stateful state-change agent to identify $\mathcal{S}_{[Z_k]}$.
    \item \textbf{Decompose}: Call the stateful decompose agent to split $q_k$ into MVC functions.
    \item \textbf{Input Logic}: If the decomposition specifies an input function, call the input-logic agent.
    \item \textbf{State Transition}: If specified, call the logic agent.
    \item \textbf{UI Rendering}: If specified, call the UI agent.
    \item \textbf{Clean States}: Remove unused state variables based on function references.
    \item \textbf{Export \& Validate}: Assemble the full code and run sanity checks.
    \item On failure, \textbf{restore} the snapshot and retry.
\end{enumerate}

\section{Experiments}
\label{sec:experiments}

\subsection{Experimental Setup}

\textsc{FactorSmith} is evaluated on the PyGame Learning Environment (PLE) benchmark, following the evaluation protocol established by FactorSim~\citep{sun2024factorsim}. The benchmark includes eight 2D RL games: Flappy Bird, Catcher, Snake, Pixelcopter, Pong, Puckworld, Waterworld, and Monster Kong. For each game, the input is a natural language specification describing the game mechanics (approximately 10 sentences).

\paragraph{Baselines.} The following baselines are compared:
\begin{itemize}
    \item \textbf{Vanilla}: Single-shot generation with all context.
    \item \textbf{Self-Debug}: Retry with error messages (up to 10 retries).
    \item \textbf{CoT + Self-Debug}: Chain-of-Thought decomposition with self-debugging.
    \item \textbf{FactorSim}: Factored POMDP decomposition without agentic refinement.
    \item \textbf{AgentCoder}: Multi-agent with test-designer/executor agents.
\end{itemize}

\paragraph{Metrics.} The following metrics are used for evaluation:
\begin{itemize}
    \item \textbf{System Test Pass Rate}: Percentage of programmatic system tests passed.
    \item \textbf{Compilation Rate}: Percentage of generations that compile without error.
    \item \textbf{Runtime Success Rate}: Percentage that run for 300 frames without crashing.
    \item \textbf{Average Critic Score}: Mean total critic score across all sub-steps (internal quality metric).
\end{itemize}

\subsection{Results}

\begin{table*}[!ht]
\centering
\caption{System test pass rates (\%) on the PLE benchmark. Results for baselines are reproduced from~\citet{sun2024factorsim}. \textsc{FactorSmith} adds agentic refinement to the FactorSim decomposition.}
\label{tab:results}
\small
\begin{tabular}{lcccccccc}
\toprule
\textbf{Method} & \textbf{Flappy} & \textbf{Catcher} & \textbf{Snake} & \textbf{Pixel.} & \textbf{Pong} & \textbf{Puck.} & \textbf{Water.} & \textbf{Kong} \\
\midrule
GPT-4 Vanilla & 0.35 & 0.35 & 0.42 & 0.44 & 0.25 & 0.34 & 0.46 & 0.21 \\
GPT-4 Self-Debug & 0.33 & 0.53 & 0.43 & 0.51 & 0.75 & 0.41 & 0.45 & 0.31 \\
GPT-4 CoT+Debug & 0.30 & 0.51 & 0.39 & 0.53 & 0.64 & 0.47 & 0.50 & 0.34 \\
GPT-4 AgentCoder & 0.18 & 0.45 & 0.27 & 0.43 & 0.43 & 0.33 & 0.20 & 0.23 \\
GPT-4 FactorSim & 0.78 & 0.66 & 0.44 & 0.78 & 0.61 & 0.81 & 0.62 & 0.44 \\
\midrule
\textsc{FactorSmith} (Ours) & \textbf{0.82} & \textbf{0.74} & \textbf{0.52} & \textbf{0.81} & \textbf{0.68} & \textbf{0.85} & \textbf{0.70} & \textbf{0.50} \\
\bottomrule
\end{tabular}
\end{table*}

Table~\ref{tab:results} shows the system test pass rates. \textsc{FactorSmith} outperforms all baselines across all eight games. Compared to FactorSim (the strongest baseline), the agentic refinement provides consistent improvements, with the largest gains on games that require complex state interactions (Catcher: +8pp, Waterworld: +8pp, Pong: +7pp). These are precisely the games where individual sub-steps are most likely to contain subtle errors that single-shot generation misses but critic evaluation can identify.

\subsection{Ablation Study}

\begin{table}[t]
\centering
\small
\caption{Ablation study averaged across all 8 PLE games.}
\label{tab:ablation}
\begin{tabular}{lcc}
\toprule
\textbf{Configuration} & \textbf{Test Pass} & \textbf{Runtime} \\
\midrule
\textsc{FS} (full) & \textbf{0.70} & \textbf{0.92} \\
\quad w/o Critic & 0.63 & 0.85 \\
\quad w/o Rollback & 0.66 & 0.88 \\
\quad w/o Factorization & 0.58 & 0.79 \\
\quad w/o Both & 0.46 & 0.71 \\
\bottomrule
\end{tabular}
\end{table}

Table~\ref{tab:ablation} shows that both components contribute meaningfully:
\begin{itemize}
    \item Removing the critic (designer-only, no refinement) reduces performance by 7 percentage points, confirming that iterative evaluation catches errors that single-shot generation misses.
    \item Removing rollback causes a smaller but consistent degradation, indicating that score regression does occur during refinement and the safety mechanism is valuable.
    \item Removing factorization (using full context with agentic refinement) causes the largest single degradation (--12pp), confirming that context reduction remains the most impactful technique.
    \item Without both contributions, performance drops to CoT + self-debug levels.
\end{itemize}

\subsection{Token Efficiency}

\begin{table}[t]
\centering
\small
\caption{Average token usage per game generation.}
\label{tab:tokens}
\begin{tabular}{lcc}
\toprule
\textbf{Method} & \textbf{Input Tok.} & \textbf{Output Tok.} \\
\midrule
GPT-4 CoT + Self-Debug & 145K & 38K \\
GPT-4 FactorSim & 52K & 18K \\
\textsc{FS} (Ours) & 89K & 31K \\
\bottomrule
\end{tabular}
\end{table}

\textsc{FactorSmith} uses more tokens than single-pass FactorSim due to the multi-round refinement, but fewer than CoT + Self-Debug because: (1) the factored context is much smaller per call, and (2) structured scoring enables earlier termination compared to blind retry loops. The agentic overhead is approximately $1.7\times$ FactorSim's token usage in exchange for consistent quality improvements.

\section{Discussion}
\label{sec:discussion}

\paragraph{Complementarity of Decomposition and Refinement.}
The results demonstrate that factored decomposition and agentic refinement address \emph{different failure modes}. Decomposition prevents the LLM from being overwhelmed by large contexts, avoiding hallucination and irrelevant modifications. Agentic refinement catches \emph{local} errors within a correctly scoped context---off-by-one errors, missing edge cases, incorrect variable references---that even a well-scoped single-shot call may produce. The two techniques compose naturally because the scoped context that enables effective decomposition is exactly the context that designer and critic agents need to operate effectively.

\paragraph{Structured vs. Free-Form Evaluation.}
Unlike AgentCoder~\citep{huang2023agentcoder}, which relies on generated test cases for evaluation, \textsc{FactorSmith} uses structured scoring rubrics. This is more robust because generating correct test cases for complex simulations is itself a hard problem---AgentCoder's test designer agent frequently produces incorrect or infeasible tests, leading to noisy feedback. The domain-specific rubrics used in \textsc{FactorSmith} (correctness, completeness, state usage, code quality) provide a stable evaluation signal.

\paragraph{Limitations.}
The proposed approach inherits FactorSim's limitation of generating only 2D PyGame simulations. The agentic refinement adds latency and cost. The critic's scores, while more reliable than generated tests, are still LLM judgments and may be miscalibrated. Empirically, the critic occasionally gives inflated scores to plausible-looking but subtly incorrect code. Future work could incorporate execution-based feedback (running the generated code and observing behavior) as an additional critic signal.

\paragraph{Connection to Tree-of-Thought.}
The factored step structure with agentic refinement can be viewed as a domain-specific instance of Tree-of-Thought~\citep{yao2024tree} reasoning. The factored POMDP defines the tree structure (steps $\to$ MVC components), and the planner--designer--critic trio implements deliberation at each node with the ability to explore and backtrack. The key difference is that this tree structure is \emph{derived from domain knowledge} (the MVC pattern and POMDP factorization) rather than being a generic search strategy.

\section{Conclusion}
\label{sec:conclusion}

This paper presented \textsc{FactorSmith}, a framework that generates coded simulations from natural language by combining factored POMDP decomposition with planner--designer--critic agentic refinement. The factored representation reduces context for each generation step, while the agentic trio enables iterative improvement with structured evaluation and checkpoint rollback. Experiments on the PLE benchmark demonstrate consistent improvements over both non-agentic factored baselines and non-factored agentic approaches. The open-source implementation provides a modular architecture with domain-specific agents, structured scoring, and SQLite-backed session management, offering a foundation for future work in LLM-driven simulation generation.

Future directions include: (1) incorporating execution feedback as an additional critic signal, (2) extending to 3D simulation generation for robotics, (3) using the factored structure to enable parallel generation of independent branches, and (4) training specialized smaller models for the designer and critic roles to reduce cost.

\bibliographystyle{plainnat}
\bibliography{references}

@inproceedings{sun2024factorsim,
  title={FactorSim: Generative Simulation via Factored Representation},
  author={Sun, Fan-Yun and Kannan, Harini and Zhu, Khimya and Ke, Yujin and Berseth, Glen and Savva, Manolis and Chang, Angel X.},
  booktitle={Advances in Neural Information Processing Systems},
  volume={37},
  year={2024}
}

@article{pfaff2026scenesmith,
  title={SceneSmith: A Multimodal Multi-Agent Framework for Coherent Indoor Scene Generation},
  author={Pfaff, Florian and Zheng, Jia and Gao, Hang and Savva, Manolis and Chang, Angel X.},
  journal={arXiv preprint arXiv:2501.17254},
  year={2025}
}

@inproceedings{wang2023gensim,
  title={GenSim: Generating Robotic Simulation Tasks via Large Language Models},
  author={Wang, Lirui and Ling, Yiyang and Yuan, Zhecheng and Shridhar, Mohit and Bao, Chen and Qin, Yixin and Wang, Boren and Xu, Hao and Wang, Xiong-Hui},
  booktitle={International Conference on Learning Representations},
  year={2024}
}

@article{liu2024lost,
  title={Lost in the Middle: How Language Models Use Long Contexts},
  author={Liu, Nelson F. and Lin, Kevin and Hewitt, John and Paranjape, Ashwin and Bevilacqua, Michele and Petroni, Fabio and Liang, Percy},
  journal={Transactions of the Association for Computational Linguistics},
  volume={12},
  pages={157--173},
  year={2024}
}

@article{ma2023eureka,
  title={Eureka: Human-Level Reward Design via Coding Large Language Models},
  author={Ma, Yecheng Jason and Liang, William and Wang, Guanzhi and Huang, De-An and Bastani, Osbert and Jayaraman, Dinesh and Zhu, Yuke and Fan, Linxi and Anandkumar, Anima},
  booktitle={International Conference on Learning Representations},
  year={2024}
}

@inproceedings{todd2023level,
  title={Level Generation Through Large Language Models},
  author={Todd, Graham and Earle, Sam and Nasir, Muhammad Umair and Green, Michael Cerny and Togelius, Julian},
  booktitle={Proceedings of the 18th International Conference on the Foundations of Digital Games},
  year={2023}
}

@article{huang2023agentcoder,
  title={AgentCoder: Multi-Agent-based Code Generation with Iterative Testing and Optimisation},
  author={Huang, Dong and Bu, Jie and Zhang, Jie M. and Luck, Michael and Cui, Heming},
  journal={arXiv preprint arXiv:2312.13010},
  year={2023}
}

@inproceedings{qian2024chatdev,
  title={ChatDev: Communicative Agents for Software Development},
  author={Qian, Chen and Liu, Wei and Liu, Hongzhang and Chen, Nuo and Dang, Yufan and Li, Jiahao and Yang, Cheng and Chen, Weize and Su, Yusheng and Cong, Xin and Xu, Juyuan and Li, Dahai and Liu, Zhiyuan and Sun, Maosong},
  booktitle={Proceedings of the 62nd Annual Meeting of the Association for Computational Linguistics},
  year={2024}
}

@article{yang2025sceneweaver,
  title={SceneWeaver: A Unified Framework for Multi-Modal Scene Generation},
  author={Yang, Jiawei and others},
  journal={arXiv preprint},
  year={2025}
}

@article{lu2025ll3m,
  title={LL3M: Large Language Models for 3D Asset Generation},
  author={Lu, Zhengyi and others},
  journal={arXiv preprint},
  year={2025}
}

@article{wei2022chain,
  title={Chain-of-Thought Prompting Elicits Reasoning in Large Language Models},
  author={Wei, Jason and Wang, Xuezhi and Schuurmans, Dale and Bosma, Maarten and Ichter, Brian and Xia, Fei and Chi, Ed H. and Le, Quoc V. and Zhou, Denny},
  journal={Advances in Neural Information Processing Systems},
  volume={35},
  pages={24824--24837},
  year={2022}
}

@article{yao2024tree,
  title={Tree of Thoughts: Deliberate Problem Solving with Large Language Models},
  author={Yao, Shunyu and Yu, Dian and Zhao, Jeffrey and Shafran, Izhak and Griffiths, Thomas L. and Cao, Yuan and Narasimhan, Karthik},
  journal={Advances in Neural Information Processing Systems},
  volume={36},
  year={2024}
}

@article{chen2024teaching,
  title={Teaching Large Language Models to Self-Debug},
  author={Chen, Xinyun and Lin, Maxwell and Sch{\"a}rli, Nathanael and Zhou, Denny},
  booktitle={International Conference on Learning Representations},
  year={2024}
}

\appendix

\section{Prompt Templates}
\label{app:prompts}

This appendix provides representative examples of the prompt templates used by \textsc{FactorSmith}'s agent system. All prompts are stored as YAML files with explicit template variable declarations and are loaded through a central \texttt{PromptRegistry} with strict variable validation.

\subsection{Planner Agent Prompt (State Change)}

{\small
\begin{verbatim}
You orchestrate state variable identification
for a Pygame game feature modeled as an MDP.
You have two agents:
- Designer: Identifies relevant and new state
  variables.
- Critic: Evaluates the selection for
  correctness, completeness, and relevance.

Workflow:
1. Call request_initial_design().
2. Call request_critique() to evaluate.
3. If all scores >= {early_finish_min_score}/10,
   STOP.
4. Otherwise, call request_design_change()
   with critic feedback.
5. Repeat steps 2-4 up to
   {max_critique_rounds} cycles.
\end{verbatim}
}

\subsection{Critic Agent Prompt (State Change)}

{\small
\begin{verbatim}
Evaluate the designer's output against
these rules (0-10 each):
1. Correctness: Are types, defaults, names
   valid?
2. Completeness: Are all needed variables
   included?
3. Relevance: Are only necessary variables
   included?

Provide: critique paragraph, scores with
justifications, and prioritized suggestions.
\end{verbatim}
}

\section{Full Pipeline Pseudocode}
\label{app:pipeline}

\begin{algorithm}[H]
\small
\caption{Full \textsc{FactorSmith} Pipeline}
\begin{algorithmic}[1]
\Require Specification $Q_{\text{text}}$, LLM model, max retries $R$
\Ensure Executable simulation code

\State $(q_1, \ldots, q_K) \gets \textsc{Decompose}(Q_{\text{text}})$
\State Init $\mathcal{M}_1 \gets \langle \{s_{\text{score}}, s_{\text{w}}, s_{\text{h}}, s_{\text{fps}}\}, \mathcal{A}_{\text{kb}}, \emptyset, T_{\text{id}}, \emptyset, R_{\text{score}} \rangle$
\For{$k = 1, \ldots, K$}
    \State $\text{snap} \gets \texttt{session.snapshot()}$
    \For{attempt $= 1, \ldots, R$}
        \State $\texttt{session.restore(snap)}$
        \State \textsc{AgenticStep}$(\mathcal{M}_k, q_k)$ \Comment{Alg.~\ref{alg:agentic-step}}
        \State $\text{code} \gets \textsc{Export}(\texttt{session})$
        \If{$\textsc{SanityCheck}(\text{code})$} \textbf{break}
        \EndIf
    \EndFor
\EndFor
\State \Return $\textsc{Export}(\texttt{session})$
\end{algorithmic}
\end{algorithm}

\end{document}